\begin{document}
\title{Rough matroids based on coverings}

\author{Bin Yang, Hong Zhao and William Zhu~~\thanks{Corresponding author.
E-mail: williamfengzhu@gmail.com (William Zhu)}}
\institute{
Lab of Granular Computing,\\
Minnan Normal University, Zhangzhou 363000, China}



\date{\today}          
\maketitle

\begin{abstract}
The introduction of covering-based rough sets has made a substantial contribution to
the classical rough sets. However, many vital problems in rough sets, including attribution reduction, are NP-hard and
therefore the algorithms for solving them are usually greedy. Matroid, as a generalization of linear
independence in vector spaces, it has a variety of applications in many fields such as algorithm design and combinatorial
optimization. An excellent introduction to the topic of rough matroids is due to Zhu and Wang. On the basis of their work, we study
the rough matroids based on coverings in this paper. First, we investigate some properties of the definable sets with respect to a covering.
Specifically, it is interesting that the set of all definable sets with respect to a covering, equipped with the binary relation of inclusion $\subseteq$, constructs a lattice.
Second, we propose the rough matroids based on coverings, which are a generalization of the rough matroids based on relations.
Finally, some properties of rough matroids based on coverings are explored. Moreover, an equivalent formulation of rough matroids based on coverings is presented. These interesting and important results exhibit many potential connections between rough sets and matroids.\\                                                                                                                                                                                                                

\textbf{Keywords:}~covering; rough set; matroid; rough matroid; definable set; lattice

\end{abstract}


\section{Introduction}
Rough sets were originally proposed by Pawlak~\cite{Pawlak82Rough,Pawlak91Rough,Pawlak02Rough}
as useful tools for dealing with the vagueness and granularity in information systems. This theory can approximately characterize
an arbitrary subset of a universe by using two definable subsets
called lower and upper approximation operators~\cite{ChenZhangYeungTsang06Rough}. Now, with
the fast development of rough sets in recent years, it has
already been applied in fields such as knowledge discovery~\cite{GalvezDCG00AnApplication,JohnsonLiuChen00Unification,LeungWuZhang06Knowledge},
machine learning~\cite{HanHuCercone00Supervised,Kosko92Neural}, decision analysis~\cite{GrecoMatarazzoSlowinski00Rough}, process control~\cite{Inuiguchi04Generalizations,Kryszkiewicz98Rough}, pattern
recognition~\cite{LeeKCHCoMine03Efficient,MinLiuFang08Rough} and many other areas~\cite{BazanOsmolskiSkowronSlezakSzczukaWroblewski02Rough,BittnerStell00Rough,ChenLi12AnApplication,DejaSlezak01Rough,LashinKozaeKhadraMedhat05Rough,LiZhangMa05Rough}.
Covering-based rough sets have been proposed as a
generalization of classical rough sets and the study on covering-based rough sets is fetching in more and more researchers in the past few years.

The concept of matroids was originally introduced by Whitney~\cite{H.WhitneyOn} in 1935 as a
generalization of graph theory and linear algebra. Matroid is a structure that
generalizes linear independence in vector spaces, and has a variety of applications in
many fields such as algorithm design~\cite{Edmonds71Matroids} and combinatorial optimization~\cite{Lawler01Combinatorialoptimization}. In theory,
matroid provides a well platform to connect it with other theories.
Some interesting results about the connection between matroids and rough sets can be
found in literatures~\cite{HuangZhu12geometric,LiZhu13Covering,LiLiu12Matroidal,LiuZhuZhang12Relationshipbetween,LiuZhu12matroidal,TangSheZhu12matroidal,WangZhuZhuMin12Matroidal,WangZhu12Quantitative,WangZhu13Equivalent,YangZhu13Matroidal,YangZhu13covering-basedrough}.

This paper proposes the concept of rough matroids based on coverings, as an excellent generalization of the rough matroids based on relations, is due to Zhu and Wang~\cite{Zhuwang13roughmatroids} for integrating rough sets and matroids. With this new concept, we can not only study rough sets with matroidal structures, but can also investigate matroids from a wider perspective. Begin to this paper, we define the definable set based on a covering, which generalizes from the definable set based on a relation. Moreover, on the basis of the definition of definable set based on a covering, some properties of definable set based on a covering are investigated. Specifically, it is interesting that the set of all definable sets with respect to a covering, equipped with the binary relation of inclusion $\subseteq$, constructs a lattice. Second, we propose the concept of rough matroids based on coverings from the viewpoint of definable set based on a covering. In fact, the definition of rough matroids based on coverings is a generalization of rough matroids based on relations. Finally, some properties of rough matroids based on coverings are explored. Moreover, an equivalent formulation of rough matroids based on coverings are presented.
These results show many potential connections between covering-based rough sets and matroids.

The remainder of this paper is organized as follows: In Section~\ref{section1}, some basic concepts and properties related to covering-based rough sets, lattices, matroids and rough matroids based on relations are introduced. In Section~\ref{section2}, we propose the rough matroids based on coverings, which are a generalization of the rough matroids based on relations. In Section~\ref{section3}, some properties of rough matroids based on coverings are explored. Section~\ref{section4} concludes this paper.
\section{Preliminaries}
\label{section1}
In this section, we recall some fundamental concepts and properties of covering-based rough sets, lattice and rough matroids based on relations. We firstly present several denotations.

Let $U$ be a set, $U\times U$ the product set of $U$ and $U$. Any subset $R$ of $U\times U$ is
called a binary relation on $U$. For any $(x$, $y)\in U\times U$, if $(x$, $y)\in R$, then we say $x$
has relation with $y$, and denote this relationship as $xRy$.

For any $x\in U$, we call the set $\{y\in U\mid xRy\}$ the successor neighborhood of $x$ in $R$ and denote
it as $RN(x)$.

Throughout this paper, a binary relation is simply called a relation and it is defined on a finite and nonempty set.

\subsection{Covering-based rough sets}
In this subsection, we review some basic definitions and results of covering-based rough sets used in this paper. For detailed descriptions about covering-based rough sets, please refer to
~\cite{Yao98Constructive,YaoYao12Covering,ZhuWang03Reduction}.
\begin{definition}(Covering~\cite{ZhuWang03Reduction})
Let $U$ be a universe of discourse, $\mathbf{C}$ a family of subsets of $U$. If none subsets in $\mathbf{C}$
is empty, and $\bigcup \mathbf{C}=U$, then $\mathbf{C}$ is called a covering of $U$.
\end{definition}

It is clear that a partition of $U$ is certainly a covering of $U$, so the concept of a covering is an extension of the concept of a partition. In the following discussion, the universe of discourse $U$ is considered to be finite.

\begin{definition}(Covering-based approximation space~\cite{ZhuWang03Reduction})
Let $U$ be a universe of discourse and $\mathbf{C}$ be a covering of $U$.
We call the ordered pair $\langle U, \mathbf{C}\rangle$ a covering-based approximation space.
\end{definition}

\begin{definition}(Neighborhood~\cite{ZhuWang07Topological})
Let $\mathbf{C}$ be a covering of $U$. For any $x\in U$, we define the neighborhood of $x$ as follows:
\begin{center}
$N_{\mathbf{C}}(x)=\bigcap\{K\in \mathbf{C}: x\in K\}$.
\end{center}
\end{definition}

\begin{definition}(Lower and upper approximations~\cite{ZhuWang07Topological})
$\forall X\subseteq U$, the lower approximation of $X$ is defined as
\begin{center}
$X_{+}=\{x\in U: N_{\mathbf{C}}(x)\subseteq X\}$
\end{center}
and the upper approximation of $X$ is defined as
\begin{center}
$X^{+}=\{x\in U: N_{\mathbf{C}}(x)\bigcap X\neq\emptyset\}$.
\end{center}

Operations $XL_{\mathbf{C}}$ and $XH_{\mathbf{C}}$ on $2^{U}$ defined as follows:
\begin{center}
$XL_{\mathbf{C}}(X)=X_{+}, XH_{\mathbf{C}}(X)=X^{+}$
\end{center}
are called the lower approximation operator and the upper approximation operator, coupled with the covering $\mathbf{C}$, respectively. When the covering is clear, we omit the lowercase $\mathbf{C}$ for the two operations.
\end{definition}

Let $\sim X=U-X$, we have the following property of the approximation operations, which defined in the above definition.
\begin{theorem}\cite{ZhuWang07Topological}
Let $\mathbf{C}$ be a covering of $U$ and $X\subseteq U$. Then
\begin{center}
$XL(\sim X)=\sim XH(X)$.
\end{center}
\end{theorem}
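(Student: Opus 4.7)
The plan is to prove the equality $XL(\sim X) = \sim XH(X)$ by showing double containment, though in fact the simplest route is to prove equality of the two sets directly by unfolding the definitions on an arbitrary element $x \in U$. The whole argument is essentially a duality / De Morgan observation between the universal quantifier hidden in the lower approximation and the existential one hidden in the upper approximation, mediated by the fact that complementation acts on neighborhoods in the expected way.

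Concretely, I would first expand the left-hand side: by the definition of the lower approximation operator, $x \in XL(\sim X)$ holds if and only if $N_{\mathbf{C}}(x) \subseteq \sim X$, i.e., $N_{\mathbf{C}}(x) \subseteq U \setminus X$. Next, I would use the elementary set-theoretic equivalence $A \subseteq U \setminus X \iff A \cap X = \emptyset$ to rewrite this condition as $N_{\mathbf{C}}(x) \cap X = \emptyset$. Then I would expand the right-hand side: $x \in \sim XH(X)$ means $x \notin XH(X)$, which by the definition of the upper approximation operator is the negation of $N_{\mathbf{C}}(x) \cap X \neq \emptyset$, namely $N_{\mathbf{C}}(x) \cap X = \emptyset$. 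Thus both sides reduce to the same membership condition on $x$, and the equality $XL(\sim X) = \sim XH(X)$ follows.

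There is no real obstacle in this proof; the only thing one has to be careful about is tracking the scope of the complement on the right-hand side (it is applied to $XH(X)$, not to $X$ inside the definition of $XH$), and invoking the equivalence $N_{\mathbf{C}}(x) \subseteq \sim X \iff N_{\mathbf{C}}(x) \cap X = \emptyset$ cleanly. Because the entire argument is a short chain of definitional rewrites, I would present it in a single compact paragraph rather than splitting it into multiple cases, and I would not need any property of neighborhoods beyond the fact that $N_{\mathbf{C}}(x)$ is a well-defined subset of $U$ — no use of the covering axioms themselves is required here.
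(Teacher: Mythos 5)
Your proof is correct: the paper cites this result from the literature without providing a proof, and your definitional unfolding ($x\in XL(\sim X)\iff N_{\mathbf{C}}(x)\subseteq\,\sim X\iff N_{\mathbf{C}}(x)\cap X=\emptyset\iff x\notin XH(X)$) is exactly the standard argument one would give. Nothing is missing.
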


\subsection{Matroids}
Matroid is a vital structure with high applicability and borrows extensively from linear algebra and graph theory. It has been
applied to a variety of fields such as combinatorial optimization~\cite{Lawler01Combinatorialoptimization} and greedy algorithm design~\cite{Edmonds71Matroids}. One of main characteristic of matroids is that there are many equivalent ways to define them, which
is the basis for its powerful axiomatic system. The following definition presents a widely used axiomatization on matroids.

\begin{definition}(Matroid~\cite{Lai01Matroid,LiuChen94Matroid})
\label{matroids}
A matroid is an ordered pair $M=(U, \mathcal{I})$, , where $U$ is a finite set, and $\mathcal{I}$  a family of subsets of $U$ with the following three properties:\\
(I1) $\emptyset\in \mathcal{I}$;\\
(I2) If $I\in \mathcal{I}$, and $I'\subseteq I$, then $I'\in\mathcal{I}$;\\
(I3) If $I_{1}$, $I_{2}\in \mathcal{I}$, and $|I_{1}|<|I_{2}|$, then there exists $e\in I_{2}-I_{1}$ such that $I_{1}\bigcup\{e\}\in\mathcal{I}$, where $|I|$ denotes the cardinality of $I$.\\
Any element of $\mathcal{I}$ is called an independent set.
\end{definition}

\begin{example}
Let $G=(V$, $U)$ be the graph as shown in Fig.\ref{fig1}. Denote $\mathcal{I}=\{I\subseteq U\mid I$ does not contain
a cycle of $G\}$, i.e., $\mathcal{I}=\{\emptyset$, $\{a_{1}\}$, $\{a_{2}\}$, $\{a_{3}\}$, $\{a_{4}\}$, $\{a_{1}$, $a_{2}\}$, $\{a_{1}$, $a_{3}\}$, $\{a_{1}$, $a_{4}\}$, $\{a_{2}$, $a_{3}\}$, $\{a_{2}$, $a_{4}\}$, $\{a_{3}$, $a_{4}\}$, $\{a_{1}$, $a_{2}$, $a_{4}\}$, $\{a_{1}$, $a_{3}$, $a_{4}\}$, $\{a_{2}$, $a_{3}$, $a_{4}\}\}$. Then $M=(U$, $\mathcal{I})$ is a matroid, where $U=\{a_{1}$, $a_{2}$, $a_{3}$, $a_{4}\}$.
\end{example}

\begin{figure}
\begin{center}
  \includegraphics[width=3cm]{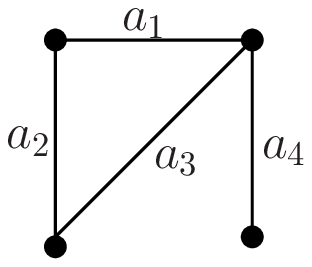}\\
  \caption{A graph}\label{fig1}
\end{center}
\end{figure}

%

\subsection{Partially ordered set and lattice}

Lattice with both order structures and algebraic structures, and it is closely linked
with many disciplines, such as group theory~\cite{AjmalJain09Someconstructions} and so on. Lattice theory plays an important role in many disciplines of computer science and engineering. For example, they have applications in distributed computing, programming language semantics~\cite{MedinaOjeda-Aciego10Multi-adjoint,Medina12Multi-adjoint,SyrrisPetridis11Alattice}.
\begin{definition}(Partially ordered set~\cite{Birhoff95Lattice,EstajiHooshmandaslDavva12Rough})
Let $U$ be a nonempty set and $\leq$ a partial order on $U$. For any $x, y, z\in U$, if
\begin{flushleft}
(1) $x\leq x$;\\
(2) $x\leq y$ and $y\leq x$ imply $x=y$;\\
(3) $x\leq y$ and $y\leq z$ imply $x\leq z$;
\end{flushleft}
then $\langle U, \leq\rangle$ is called a partially ordered set.
\end{definition}

Based on the partially ordered set, we introduce the concept of lattice.
\begin{definition}(Lattice~\cite{Birhoff95Lattice,EstajiHooshmandaslDavva12Rough})
A partially ordered set $\langle U, \leq\rangle$ is a lattice if $a\vee b$ and $a\wedge b$
exist for all $a, b\in U$. $\langle U, \vee, \wedge\rangle$ is called an algebraic system induced by lattice $\langle U, \leq\rangle$.
\end{definition}

We list the properties of the algebraic system induced by a lattice in the following theorem.

\begin{theorem}(~\cite{Birhoff95Lattice,EstajiHooshmandaslDavva12Rough})
\label{THLattice}
Let $\langle U, \vee, \wedge\rangle$ be an algebraic system induced by lattice $\langle U, \leq\rangle$.
For any $a, b, c\in U$, the algebra has the following identities:
\begin{flushleft}
(P1) $a\vee a=a, a\wedge a=a$;\\
(P2) $a\vee b=b\vee a, a\wedge b=b\wedge a$;\\
(P3) $(a\vee b)\vee c=a\vee (b\vee c)$;\\
(P4) $a\vee (a\wedge b)=a, a\wedge (a\vee b)=a$.
\end{flushleft}
\end{theorem}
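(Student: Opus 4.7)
The plan is to work directly from the definitions: in the lattice $\langle U, \leq\rangle$, the element $a\vee b$ is the least upper bound (supremum) of $\{a,b\}$ and $a\wedge b$ is the greatest lower bound (infimum) of $\{a,b\}$. Each of the four identities will follow from this characterization together with the reflexivity, antisymmetry, and transitivity axioms of the partial order. Because all four identities split into a join-version and a dual meet-version, I will only treat the join-version in each case and observe that the meet argument is the order-theoretic dual.

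First I would dispose of (P1) and (P2) as warm-ups. For (P1), reflexivity gives $a\leq a$, so $a$ is an upper bound of $\{a\}$; any other upper bound $u$ also satisfies $a\leq u$, so $a$ is the least such, hence $a\vee a=a$. For (P2), the set $\{a,b\}$ equals the set $\{b,a\}$, so their supremum is the same element; thus $a\vee b = b\vee a$. Both arguments use only the set-theoretic description of the join, so antisymmetry is invoked implicitly to guarantee that this unique supremum is a well-defined element of $U$.

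Next I would handle (P4), the absorption law. To show $a\vee(a\wedge b)=a$, note that $a\wedge b\leq a$ by the definition of meet, and $a\leq a$ by reflexivity, so $a$ is an upper bound of $\{a,a\wedge b\}$; conversely, any upper bound $u$ of this pair satisfies $a\leq u$, so $a$ is its least upper bound. By the dual argument using $a\leq a\vee b$, one obtains $a\wedge(a\vee b)=a$. Here antisymmetry is what turns ``$a$ is a least upper bound and so is $a\vee(a\wedge b)$'' into the required equality.

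The main obstacle, and the place where I would spend the most care, is (P3). The standard approach is to show that both $(a\vee b)\vee c$ and $a\vee(b\vee c)$ coincide with the least upper bound of $\{a,b,c\}$; then antisymmetry gives the equality. Concretely, $(a\vee b)\vee c$ dominates $a\vee b$ and hence dominates $a$ and $b$, and also dominates $c$, so it is an upper bound of $\{a,b,c\}$; and any upper bound $u$ of $\{a,b,c\}$ dominates $a\vee b$ (as the least upper bound of $\{a,b\}$) and dominates $c$, hence dominates $(a\vee b)\vee c$. A symmetric argument applies to $a\vee(b\vee c)$. Transitivity is used repeatedly to chain these inequalities. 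This ``collapse'' from binary joins to the join of a three-element set is the only nontrivial idea in the theorem, so that is where I would write the most detail.
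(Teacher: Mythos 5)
Your proof is correct: deriving each identity from the characterization of $\vee$ and $\wedge$ as least upper bound and greatest lower bound, with associativity handled by showing both sides of (P3) equal the supremum of $\{a,b,c\}$, is the standard argument. The paper itself gives no proof of this theorem --- it is quoted from the cited references (Birkhoff; Estaji et al.) --- and your argument is exactly the one found there, so there is nothing to compare beyond confirming its correctness.
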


The following definition shows the conditions that an algebraic system is a lattice.
\begin{definition}(~\cite{Birhoff95Lattice,EstajiHooshmandaslDavva12Rough})
\label{lattice8}
Let $\langle U, \vee, \wedge\rangle$ be an algebraic system. If $\wedge$ and $\vee$ satisfy (P2)-(P4) of Theorem~\ref{THLattice}, then
$\langle U, \vee, \wedge\rangle$ is a lattice.
\end{definition}
\subsection{Rough matroids based on relations}
Rough matroids based on relations have been proposed by Zhu and Wang~\cite{Zhuwang13roughmatroids} as a generalization of matroids.
That is, matroids are a special case of rough matroids based on relations. Rough matroids based on relations not only reflect
some important characteristics of rough sets, but also have the advantages of matroids.
\begin{definition}(Definable set based on a relation~\cite{Zhuwang13roughmatroids})
Let $R$ be a relation on $U$. For all $X\subseteq U$, if $X=\underset{x\in X}{\bigcup}RN(x)$, then $X$
is called a definable set with respect to $R$. The family of all definable sets with respect to $R$ is denoted
by $\mathbb{D}(U, R)$.
\end{definition}

In the following definition, we define lower rough matroids based on relations using the lower approximation operator.
\begin{definition}(Lower rough matroid based on a relation~\cite{Zhuwang13roughmatroids})
Let $R$ be a relation on $U$. A lower rough matroid based on $R$ is an ordered pair
$\underline{M}_{R}=(U, \underline{\mathcal{I}}_{R})$ where $\underline{\mathcal{I}}_{R}\subseteq\mathbb{D}(U, R)$
satisfies the following three conditions:

\begin{flushleft}
(LI1) $\emptyset\in\underline{\mathcal{I}}_{R}$;\\
(LI2) If $I\in \underline{\mathcal{I}}_{R}$, $I'\in\mathbb{D}(U, R)$ and $\underline{R}(I')\subseteq \underline{R}(I)$, then $I'\in\underline{\mathcal{I}}_{R}$;\\
(LI3) If $I_{1}, I_{2}\in\underline{\mathcal{I}}_{R}$ and $|\underline{R}(I_{1})|<|\underline{R}(I_{2})|$, then there exists $I\in\underline{\mathcal{I}}_{R}$ such that $\underline{R}(I_{1})\subset \underline{R}(I)\subseteq \underline{R}(I_{1})\bigcup \underline{R}(I_{2})$.
\end{flushleft}
\end{definition}

Similarly, the definition of upper rough matroids based on relations is presented using the upper approximation operator.
\begin{definition}(Upper rough matroid based on a relation~\cite{Zhuwang13roughmatroids})
Let $R$ be a relation on $U$. An upper rough matroid based on $R$ is an ordered pair
$\overline{M}_{R}=(U, \overline{\mathcal{I}}_{R})$ where $\overline{\mathcal{I}}_{R}\subseteq\mathbb{D}(U, R)$
satisfies the following three conditions:

\begin{flushleft}
(UI1) $\emptyset\in\overline{\mathcal{I}}_{R}$;\\
(UI2) If $I\in \overline{\mathcal{I}}_{R}$, $I'\in\mathbb{D}(U, R)$ and $\overline{R}(I')\subseteq \overline{R}(I)$, then $I'\in\overline{\mathcal{I}}_{R}$;\\
(UI3) If $I_{1}, I_{2}\in\overline{\mathcal{I}}_{R}$ and $|\overline{R}(I_{1})|<|\overline{R}(I_{2})|$, then there exists $I\in\overline{\mathcal{I}}_{R}$ such that $\overline{R}(I_{1})\subset \overline{R}(I)\subseteq \overline{R}(I_{1})\bigcup \overline{R}(I_{2})$.
\end{flushleft}
\end{definition}

The following example is used to illustrate the lower and upper rough matroids based on relations.
\begin{example}
Let $U=\{a_{1}, a_{2}, a_{3}, a_{4}\}, R_{1}=\{(a_{1}, a_{1}), (a_{2}, a_{1}), (a_{2}, a_{2}), (a_{3}, a_{1}), (a_{3},$\\$ a_{3})\}$
and $R_{2}=R_{1}\bigcup\{(a_{4}, a_{4})\}$. Suppose that $\mathcal{I}=\{\emptyset, \{a_{1}\}, \{a_{1}, a_{2}\}, \{a_{1}, a_{3}\}\}$. Then
$M=(U, \mathcal{I})$ is an upper rough matroids based on $R_{1}$ and a lower rough matroid based on $R_{2}$.
\end{example}
\section{Rough matroids based on coverings}
\label{section2}
Rough set is a useful tool for dealing with the vagueness and granularity in information systems. It is widely used in attribute reduction
in data mining. There are many optimization issues in attribute reduction. Matroid theory is a branch of combinatorial mathematics. It is widely used in optimization. Therefore, it is a good idea to integrate rough sets and matroids. In theory, matroid theory provides a good platform to connect it with rough set theory. An excellent introduction to
the topic of rough matroids is due to Zhu and Wang. In this section, on the basis of their work, we propose rough matroid based on a covering, which is a generalization of rough matroid based on a relation.
\subsection{Definable sets based on coverings}
We firstly present the definition of definable set based on a covering, which is the foundation of rough matroids based on coverings.
\begin{definition}(Definable set based on a covering)
\label{definition1}
Let $\mathbf{C}$ be a covering of $U$. For all $X\subseteq U$, if $X=\underset{x\in X}{\bigcup}N_{\mathbf{C}}(x)$, then $X$
is called a definable set with respect to $\mathbf{C}$. The family of all definable sets with respect to $\mathbf{C}$ is denoted
by $\mathbb{D}(U, \mathbf{C})$.
\end{definition}

Note that $\emptyset\in\mathbb{D}(U, \mathbf{C})$ since $\emptyset=\underset{x\in\emptyset}{\bigcup}N_{\mathbf{C}}(x)$, which is essential for
completeness in some results of this paper.

In fact, the definable set as the above definition shown is a similar generalization of that in relation-based rough sets. We illustrate it with the following example.
\begin{example}
\label{example1}
Let $U=\{a, b, c, d, e, f\}$ and $\mathbf{C}=\{\{e, f\}, \{a, d, e\}, \{a, d, f\}, \{b, c, e\},$\\$ \{b, c, f\}, \{a, b, c, d\}\}$. Then\\
\begin{center}
\begin{tabular}{cccccc}
   \hline
   $~~~~~~N_{\mathbf{C}}(a)$ & ~~~~~~$N_{\mathbf{C}}(b)$ & $~~~~~~N_{\mathbf{C}}(c)$ & $~~~~~~N_{\mathbf{C}}(d)$ & $~~~~~~N_{\mathbf{C}}(e)$ & $~~~~~~N_{\mathbf{C}}(f)$ \\
   \hline
   $~~~~~~\{a, d\}$ & $~~~~~~\{b, c\}$ & $~~~~~~\{b, c\}$ & $~~~~~~\{a, d\}$ & $~~~~~~\{e\}$ & $~~~~~~\{f\}$ \\
   \hline\\
 \end{tabular}
\end{center}
Let $X=\{b, d, f\}$ and $Y=\{a, b, c, d\}$. Then $X$ is not a definable set with respect to $\mathbf{C}$ since $X\neq\underset{x\in X}{\bigcup}N_{\mathbf{C}}(x)=N_{\mathbf{C}}(b)\bigcup N_{\mathbf{C}}(d)\bigcup N_{\mathbf{C}}(f)=\{a, b, c, d, f\}$. Conversely, $Y$
is a definable set with respect to $\mathbf{C}$ since $Y=\underset{y\in Y}{\bigcup}N_{\mathbf{C}}(y)=N_{\mathbf{C}}(a)\bigcup N_{\mathbf{C}}(b)\bigcup N_{\mathbf{C}}(c)\bigcup$\\$ N_{\mathbf{C}}(d)=\{a, b, c, d\}$. The family of all definable sets with respect to $\mathbf{C}$ is
$\mathbb{D}(U, \mathbf{C})=\{\emptyset, \{e\}, \{f\}, \{a, d\}, \{b, c\}, \{e, f\}, \{a, d, e\}, \{a, d, f\}, \{b, c, e\}, \{b, c, f\}, \{a, b, c, d\},$\\$\{a, d, e, f\}, \{b, c, e, f\}, \{a, b, c, d, e\}, \{a, b, c, d, f\}, \{a, b, c, d, e, f\}\}$.
\end{example}

\begin{proposition}
\label{proposition1}
If $X, Y\in\mathbb{D}(U, \mathbf{C})$, then $X\bigcap Y, X\bigcup Y\in\mathbb{D}(U, \mathbf{C})$.
\end{proposition}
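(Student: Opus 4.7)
The plan is to extract a small but convenient reformulation of the definability condition and then apply it separately to the two operations. Specifically, I would first observe that $X \in \mathbb{D}(U,\mathbf{C})$ if and only if $N_{\mathbf{C}}(x) \subseteq X$ for every $x \in X$. The forward implication is immediate because each $N_{\mathbf{C}}(x)$ appears as a term in the union $\bigcup_{x\in X} N_{\mathbf{C}}(x) = X$. For the converse, the elementary fact that $x \in N_{\mathbf{C}}(x)$ (since $x$ lies in every $K \in \mathbf{C}$ containing $x$, hence in their intersection) gives $X \subseteq \bigcup_{x\in X} N_{\mathbf{C}}(x)$, and the hypothesis $N_{\mathbf{C}}(x) \subseteq X$ supplies the reverse inclusion.

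With this characterization in hand, both closure claims reduce to a one-line check. For the union, pick any $z \in X \cup Y$; either $z \in X$, whence $N_{\mathbf{C}}(z) \subseteq X \subseteq X \cup Y$, or $z \in Y$ and the symmetric argument applies. For the intersection, any $z \in X \cap Y$ lies in both $X$ and $Y$, so the characterization applied to each set yields $N_{\mathbf{C}}(z) \subseteq X$ and $N_{\mathbf{C}}(z) \subseteq Y$ simultaneously, giving $N_{\mathbf{C}}(z) \subseteq X \cap Y$. In both cases I then invoke the reverse direction of the characterization to conclude that $X \cup Y$ and $X \cap Y$ belong to $\mathbb{D}(U,\mathbf{C})$.

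I do not expect a genuine obstacle. The one step worth stating explicitly, rather than glossing over, is $x \in N_{\mathbf{C}}(x)$, since it is what powers the reverse direction of the characterization and hence makes the final verification work. One could alternatively argue for the union directly from the equality form, using $\bigcup_{z \in X \cup Y} N_{\mathbf{C}}(z) = \bigcup_{z\in X} N_{\mathbf{C}}(z) \cup \bigcup_{z\in Y} N_{\mathbf{C}}(z) = X \cup Y$, but the intersection case does not factor through such a distributive identity, so routing everything through the pointwise characterization produces the cleanest uniform proof.
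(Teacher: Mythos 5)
Your proof is correct and takes essentially the same route as the paper's: the paper's argument for the intersection rests on exactly the two facts you isolate ($x\in N_{\mathbf{C}}(x)$, and $N_{\mathbf{C}}(z)\subseteq X$ whenever $X$ is definable and $z\in X$), and its argument for the union is precisely the distributive identity you mention as an alternative. Your only departure is organizational --- packaging these facts as an explicit pointwise characterization of definability, which incidentally anticipates the paper's later result that $\mathbb{D}(U,\mathbf{C})=\{X\subseteq U: XL_{\mathbf{C}}(X)=X\}$.
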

\begin{proof}
(1) According to Definition~\ref{definition1}, we need to prove only $X\bigcap Y=\underset{z\in X\bigcap Y}{\bigcup}N_{\mathbf{C}}(z)$.
On one hand, for any $z\in X\bigcap Y$, since $z\in N_{\mathbf{C}}(z)$, then $z\in\underset{z\in X\bigcap Y}{\bigcup}N_{\mathbf{C}}(z)$. Hence, $X\bigcap Y\subseteq\underset{z\in X\bigcap Y}{\bigcup}N_{\mathbf{C}}(z)$. On the other hand, if $z'\in\underset{z\in X\bigcap Y}{\bigcup}N_{\mathbf{C}}(z)$, then there exists $z\in X\bigcap Y$ such that $z'\in N_{\mathbf{C}}(z)$. Since $X, Y\in\mathbb{D}(U, \mathbf{C})$,
then $z'\in N_{\mathbf{C}}(z)\subseteq X$ and $z'\in N_{\mathbf{C}}(z)\subseteq Y$. Therefore, $\underset{z\in X\bigcap Y}{\bigcup}N_{\mathbf{C}}(z)\subseteq X\bigcap Y$. Then $X\bigcap Y\in\mathbb{D}(U, \mathbf{C})$.\\
(2) Since $X, Y\in\mathbb{D}(U, \mathbf{C})$, then
\begin{center}
$X\bigcup Y=(\underset{x\in X}{\bigcup}N_{\mathbf{C}}(x))\bigcup(\underset{y\in Y}{\bigcup}N_{\mathbf{C}}(y))=\underset{z\in X\bigcup Y}{\bigcup}N_{\mathbf{C}}(z)$.
\end{center}
Hence, $X\bigcup Y\in\mathbb{D}(U, \mathbf{C})$.
\end{proof}

The above proposition shows that the union and intersection of two definable sets with respect to $\mathbf{C}$ is also definable set with respect
to $\mathbf{C}$, respectively. What interesting us is that the set of all definable sets with respect to $\mathbf{C}$, equipped with the binary relation of inclusion $\subseteq$, constructs a lattice.

\begin{example}
\label{example2}
Let $U=\{a, b, c\}$ and $\mathbf{C}=\{\{a, b\}, \{b, c\}\}$. Then \\
\begin{center}
$N_{\mathbf{C}}(a)=\{a, b\}, N_{\mathbf{C}}(b)=\{b\}$ and $N_{\mathbf{C}}(c)=\{b, c\}$.
\end{center}

Therefore, $\mathbb{D}(U, \mathbf{C})=\{\emptyset, \{b\}, \{a, b\}, \{b, c\}, \{a, b, c\}\}$. It is easy to prove the partially ordered set $\langle\mathbb{D}(U, \mathbf{C}), \subseteq\rangle$ is a lattice. We draw a picture as Figure~\ref{f1} shown.

\begin{figure}
  \begin{center}
  \includegraphics[width=3.5cm]{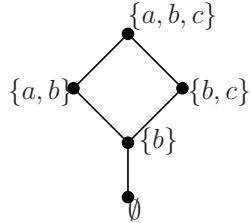}\\
  \caption{A lattice induced by $\mathbf{C}$}\label{f1}
  \end{center}
\end{figure}
\end{example}

 Similarly, the lattice $\langle\mathbb{D}(U, \mathbf{C}), \subseteq\rangle$ in Example~\ref{example1} is drawn as shown in Figure~\ref{f2}.
\begin{figure}
 \begin{center}
  \includegraphics[width=10cm]{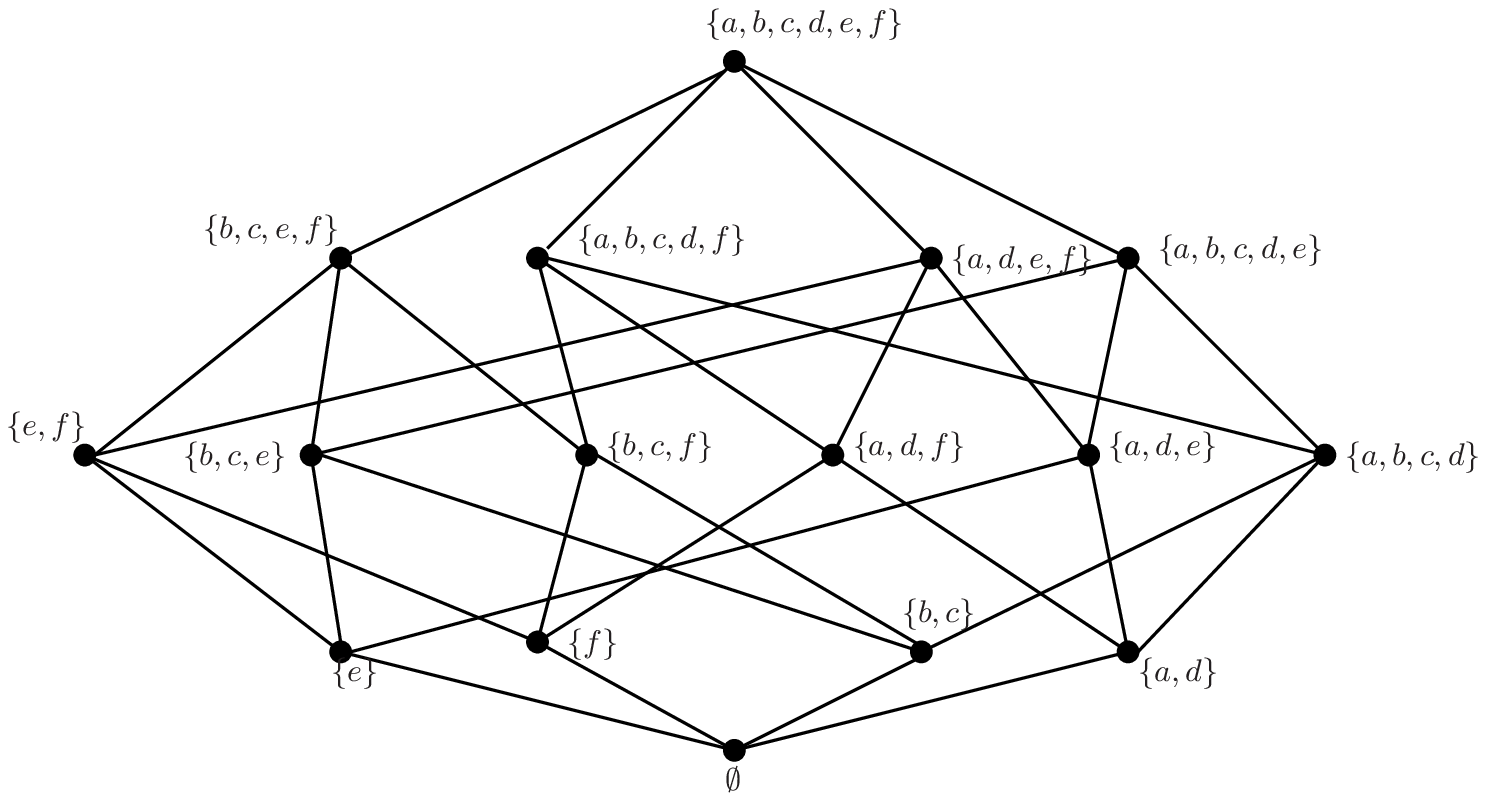}\\
  \caption{A lattice induced by $\mathbf{C}$}\label{f2}
 \end{center}
\end{figure}

\begin{proposition}
\label{proposition2}
Let $\mathbf{C}$ be a covering of $U$. Then the partially ordered set $\langle\mathbb{D}(U, \mathbf{C}), \subseteq\rangle$ is a lattice.
\end{proposition}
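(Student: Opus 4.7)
The plan is to leverage Proposition~\ref{proposition1} directly, since it already does all the nontrivial work. Given any two definable sets $X, Y \in \mathbb{D}(U, \mathbf{C})$, I would propose the join and meet operations $X \vee Y := X \cup Y$ and $X \wedge Y := X \cap Y$. Proposition~\ref{proposition1} guarantees that both $X \cup Y$ and $X \cap Y$ lie in $\mathbb{D}(U, \mathbf{C})$, so these operations are genuinely closed on the carrier set.

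Next I would verify that $X \cup Y$ is in fact the supremum of $\{X, Y\}$ with respect to $\subseteq$ inside $\mathbb{D}(U, \mathbf{C})$: it is an upper bound since $X \subseteq X \cup Y$ and $Y \subseteq X \cup Y$, and if $Z \in \mathbb{D}(U, \mathbf{C})$ with $X \subseteq Z$ and $Y \subseteq Z$, then $X \cup Y \subseteq Z$ by the universal property of union in $\langle 2^U, \subseteq\rangle$. The argument for $X \cap Y$ being the infimum is symmetric. Once each pair has a join and a meet in $\mathbb{D}(U, \mathbf{C})$, the definition of lattice is met directly.

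An equally short alternative would be to appeal to Definition~\ref{lattice8}: the operations $\cup$ and $\cap$ inherit commutativity, associativity, and the absorption identities (P2)--(P4) from $\langle 2^U, \cup, \cap\rangle$, so the algebraic system $\langle \mathbb{D}(U, \mathbf{C}), \cup, \cap \rangle$ is automatically a lattice. I would probably present the order-theoretic version since the statement is phrased in terms of the poset $\langle \mathbb{D}(U, \mathbf{C}), \subseteq\rangle$.

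There is essentially no obstacle here; the only subtle point is the closure under $\cap$ and $\cup$, and that has already been handled by Proposition~\ref{proposition1}. The proof reduces to invoking that closure and noting that the induced bounds in $2^U$ remain the least upper bound and greatest lower bound when restricted to the sublattice $\mathbb{D}(U, \mathbf{C})$.
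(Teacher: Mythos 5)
Your proposal is correct and matches the paper's proof, which simply sets $\vee=\bigcup$, $\wedge=\bigcap$ and invokes Proposition~\ref{proposition1} together with Definition~\ref{lattice8} --- exactly the ``algebraic alternative'' you describe. Your order-theoretic presentation is just a slightly more explicit version of the same idea, with the closure under $\bigcup$ and $\bigcap$ from Proposition~\ref{proposition1} doing all the real work in either formulation.
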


\begin{proof}
Suppose $\vee=\bigcup$ and $\wedge=\bigcap$. It is easy to prove this proposition by Definition~\ref{lattice8} and Proposition~\ref{proposition1}.
\end{proof}

In fact, the partially ordered set $\langle\mathbb{D}(U, \mathbf{C}), \subseteq\rangle$ is not only
a lattice, but also it is an atomic lattice.

\begin{proposition}
\label{proposition3}
Let $\mathbf{C}$ be a covering of $U$. Then
\begin{center}
$\mathbb{D}(U, \mathbf{C})=\{X\subseteq U: XL_{\mathbf{C}}(X)=X\}$.
\end{center}
\end{proposition}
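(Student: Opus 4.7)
The plan is to prove set equality in both directions by unpacking the two definitions. Recall that a set $X$ is definable iff $X=\bigcup_{x\in X}N_{\mathbf{C}}(x)$, while $XL_{\mathbf{C}}(X)=\{x\in U:N_{\mathbf{C}}(x)\subseteq X\}$. The key elementary fact I will use throughout is that $x\in N_{\mathbf{C}}(x)$ for every $x\in U$, which follows immediately from the definition $N_{\mathbf{C}}(x)=\bigcap\{K\in\mathbf{C}:x\in K\}$, since every $K$ in the intersection contains $x$ (and $\mathbf{C}$ covers $U$, so at least one such $K$ exists).

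For the inclusion $\mathbb{D}(U,\mathbf{C})\subseteq\{X\subseteq U:XL_{\mathbf{C}}(X)=X\}$, I take an arbitrary $X\in\mathbb{D}(U,\mathbf{C})$ and show $XL_{\mathbf{C}}(X)=X$ by double inclusion. If $x\in X$, then since $X=\bigcup_{y\in X}N_{\mathbf{C}}(y)$, in particular $N_{\mathbf{C}}(x)\subseteq X$, so $x\in XL_{\mathbf{C}}(X)$. Conversely, if $x\in XL_{\mathbf{C}}(X)$ then $N_{\mathbf{C}}(x)\subseteq X$, and combined with $x\in N_{\mathbf{C}}(x)$ this yields $x\in X$.

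For the reverse inclusion, I assume $XL_{\mathbf{C}}(X)=X$ and verify that $X=\bigcup_{x\in X}N_{\mathbf{C}}(x)$. The inclusion $X\subseteq\bigcup_{x\in X}N_{\mathbf{C}}(x)$ is immediate from $x\in N_{\mathbf{C}}(x)$. For the other direction, pick any $y\in\bigcup_{x\in X}N_{\mathbf{C}}(x)$; then $y\in N_{\mathbf{C}}(x)$ for some $x\in X=XL_{\mathbf{C}}(X)$, which forces $N_{\mathbf{C}}(x)\subseteq X$ and hence $y\in X$.

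There is really no significant obstacle here; the proof is essentially a bookkeeping exercise, and the only subtlety worth flagging is the reflexivity property $x\in N_{\mathbf{C}}(x)$, which is the hinge of both directions. The result should be stated as a proposition and proved in a few lines following exactly the template above.
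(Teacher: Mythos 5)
Your proof is correct and follows essentially the same route as the paper's: both directions are handled by double inclusion, hinging on the reflexivity fact $x\in N_{\mathbf{C}}(x)$ and the observation that definability of $X$ is equivalent to $N_{\mathbf{C}}(x)\subseteq X$ for all $x\in X$. The only difference is cosmetic phrasing (the paper proves $XL_{\mathbf{C}}(D)\subseteq D$ by contrapositive), so no further comment is needed.
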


\begin{proof}
For all $D\in\mathbb{D}(U, \mathbf{C})$, that is, $D=\underset{d\in D}{\bigcup}N_{\mathbf{C}}(d)$, therefore $D\subseteq XL_{\mathbf{C}}(D)$.
Since $x\in N_{\mathbf{C}}(x)$ for all $x\in U$, then $d\notin XL_{\mathbf{C}}(D)$ for all $d\notin D$. Hence, $XL_{\mathbf{C}}(D)\subseteq D$. This proves that $ XL_{\mathbf{C}}(D)=D$.
Therefore, $\mathbb{D}(U, \mathbf{C})\subseteq\{X\subseteq U: XL_{\mathbf{C}}(X)=X\}$. Conversely, for all $X\in \{X\subseteq U: XL_{\mathbf{C}}(X)=X\}$, that is $XL_{\mathbf{C}}(X)=X$, then $N_{\mathbf{C}}(x)\subseteq X$ for all $x\in X$. Hence,
$X=\underset{x\in X}{\bigcup}\{x\}\subseteq \underset{x\in X}{\bigcup}N_{\mathbf{C}}(x)\subseteq X$. This proves that
$\{X\subseteq U: XL_{\mathbf{C}}(X)=X\}\subseteq\mathbb{D}(U, \mathbf{C})$, completing the proof.
\end{proof}

\begin{corollary}
\label{corollary1}
Let $\mathbf{C}$ be a covering of $U$. Then
\begin{center}
$\mathbb{D}(U, \mathbf{C})=\{X\subseteq U: XH_{\mathbf{C}}(X)=X\}$.
\end{center}
\end{corollary}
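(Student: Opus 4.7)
The plan is to derive Corollary~\ref{corollary1} from Proposition~\ref{proposition3} by exploiting the duality identity $XL(\sim X) = \sim XH(X)$ furnished by Theorem~1. Proposition~\ref{proposition3} already identifies $\mathbb{D}(U, \mathbf{C})$ with the fixed points of $XL_{\mathbf{C}}$, and the corollary is the analogous characterization through $XH_{\mathbf{C}}$, so the natural route is to pass between the two operators by complementation.

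First I would chain the equivalences for an arbitrary $X \subseteq U$: $XH_{\mathbf{C}}(X) = X$ iff $\sim XH_{\mathbf{C}}(X) = \sim X$ iff (by Theorem~1) $XL_{\mathbf{C}}(\sim X) = \sim X$ iff (by Proposition~\ref{proposition3}) $\sim X \in \mathbb{D}(U, \mathbf{C})$. With this chain in hand, the corollary reduces to a single remaining claim, namely that $\mathbb{D}(U, \mathbf{C})$ is closed under complementation: $X \in \mathbb{D}(U, \mathbf{C})$ if and only if $\sim X \in \mathbb{D}(U, \mathbf{C})$.

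To establish that closure, I would unfold Definition~\ref{definition1}. Assuming $X = \bigcup_{x \in X} N_{\mathbf{C}}(x)$, the inclusion $\sim X \subseteq \bigcup_{y \in \sim X} N_{\mathbf{C}}(y)$ is immediate from the reflexivity $y \in N_{\mathbf{C}}(y)$, so the real content is showing the reverse inclusion, i.e.\ that $N_{\mathbf{C}}(y) \subseteq \sim X$ for every $y \notin X$. I would try the contrapositive: if some $z \in N_{\mathbf{C}}(y) \cap X$ exists, then using $N_{\mathbf{C}}(y) = \bigcap\{K \in \mathbf{C} : y \in K\}$ together with $z \in X = \bigcup_{x \in X} N_{\mathbf{C}}(x)$, one should be able to trace back to conclude $y \in X$.

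The main obstacle is precisely this closure-under-complements step. Unlike the fixed-point equivalences, which are formal consequences of the material already in the excerpt, this step requires a genuine structural argument about how neighborhoods behave across $X$ and $\sim X$. I expect to need a lemma asserting a compatibility property of $N_{\mathbf{C}}$ (for example that $z \in N_{\mathbf{C}}(y)$ implies $N_{\mathbf{C}}(z) \subseteq N_{\mathbf{C}}(y)$, or a comparable symmetry/transitivity condition), and isolating the correct such property is the crux of the argument. Once complement-closure is established, the chain of equivalences above delivers the corollary at once.
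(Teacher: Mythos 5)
Your chain of equivalences is correct, and you have put your finger on exactly the right obstruction: via Theorem~1 and Proposition~\ref{proposition3}, $XH_{\mathbf{C}}(X)=X$ iff $\sim X\in\mathbb{D}(U,\mathbf{C})$, so the corollary is equivalent to $\mathbb{D}(U,\mathbf{C})$ being closed under complementation. The gap is that this closure property cannot be supplied: it is false for a general covering. The paper's own Example~\ref{example2} is a counterexample. There $U=\{a,b,c\}$, $\mathbf{C}=\{\{a,b\},\{b,c\}\}$, $N_{\mathbf{C}}(a)=\{a,b\}$, $N_{\mathbf{C}}(b)=\{b\}$, $N_{\mathbf{C}}(c)=\{b,c\}$, and $\{b\}\in\mathbb{D}(U,\mathbf{C})$ while $\sim\{b\}=\{a,c\}$ satisfies $N_{\mathbf{C}}(a)\bigcup N_{\mathbf{C}}(c)=U\neq\{a,c\}$, so $\{a,c\}\notin\mathbb{D}(U,\mathbf{C})$. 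Correspondingly $XH_{\mathbf{C}}(\{b\})=U\neq\{b\}$ even though $\{b\}$ is definable, and $XH_{\mathbf{C}}(\{a,c\})=\{a,c\}$ even though $\{a,c\}$ is not definable; both inclusions of the corollary fail. The auxiliary lemma you propose, namely $z\in N_{\mathbf{C}}(y)\implies N_{\mathbf{C}}(z)\subseteq N_{\mathbf{C}}(y)$, is in fact true (any block containing $y$ contains $z$, so the intersection defining $N_{\mathbf{C}}(z)$ runs over a larger family), but it points the wrong way: it says the relation $z\preceq y\Leftrightarrow z\in N_{\mathbf{C}}(y)$ is a preorder, $\mathbb{D}(U,\mathbf{C})$ is its family of down-closed sets, and the $XH_{\mathbf{C}}$-fixed sets are the up-closed sets. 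These two families coincide only when the preorder is symmetric, e.g.\ when $\mathbf{C}$ is a partition.

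You should also be aware that the paper's own one-line proof has the same defect you are struggling with: it passes from the duality $XL(\sim X)=\sim XH(X)$ directly to the biconditional $XL_{\mathbf{C}}(X)=X\Leftrightarrow XH_{\mathbf{C}}(X)=X$, which is precisely the unjustified complement-closure step. What the duality actually proves is $\{X\subseteq U: XH_{\mathbf{C}}(X)=X\}=\{\sim D: D\in\mathbb{D}(U,\mathbf{C})\}$, and that is the strongest correct statement available here. So the failure is not in your execution but in the statement itself; no choice of compatibility lemma for $N_{\mathbf{C}}$ will rescue the route you outline, because the target identity is false for the covering of Example~\ref{example2}.
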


\begin{proof}
Since $XL(\sim X)=\sim XH(X)$ holds for any $X\subseteq U$, then $XL_{\mathbf{C}}(X)=X\Leftrightarrow XH_{\mathbf{C}}(X)=X$.
This completes the proof.
\end{proof}

The following proposition shows an important property of the definable sets based on coverings, which plays a vital
role in the rough matroids based on coverings.

\begin{proposition}
Let $\mathbf{C}$ be a covering of $U$. Suppose that $D_{1}, D_{2}\in \mathbb{D}(U, \mathbf{C})$, $|D_{1}|<|D_{2}|$
and $d\in D_{2}-D_{1}$. Then $D_{1}\bigcup\{d\}\notin \mathbb{D}(U, \mathbf{C})$ if and only if there exists $e\in D_{2}-D_{1}$ such that
$d\neq e$ and $e\in N_{\mathbf{C}}(d)$.
\end{proposition}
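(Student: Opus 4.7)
The plan is to reduce membership of $D_1\cup\{d\}$ in $\mathbb{D}(U,\mathbf{C})$ to a single pointwise condition on one neighborhood. Observe first that, by Definition~\ref{definition1}, a subset $D\subseteq U$ lies in $\mathbb{D}(U,\mathbf{C})$ if and only if $N_{\mathbf{C}}(x)\subseteq D$ for every $x\in D$; the reverse containment $D\subseteq\bigcup_{x\in D}N_{\mathbf{C}}(x)$ always holds because $x\in N_{\mathbf{C}}(x)$. This equivalent pointwise characterization is essentially Proposition~\ref{proposition3}.

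Applying this reformulation to $D_1\cup\{d\}$, I note that for every $x\in D_1$ the containment $N_{\mathbf{C}}(x)\subseteq D_1\subseteq D_1\cup\{d\}$ holds because $D_1$ is itself definable. Hence the only way $D_1\cup\{d\}$ can fail to be definable is through $d$ itself, and concretely
\[
D_1\cup\{d\}\notin\mathbb{D}(U,\mathbf{C})\iff \exists\, e\in N_{\mathbf{C}}(d)\text{ with }e\neq d\text{ and }e\notin D_1.
\]

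From this intermediate equivalence both halves of the stated biconditional drop out with almost no further work. For the ``if'' direction, an $e\in D_2-D_1$ with $e\neq d$ and $e\in N_{\mathbf{C}}(d)$ is exactly a witness of the right-hand side above. For the ``only if'' direction, I promote a witness $e$ of the right-hand side into $D_2-D_1$ by using that $D_2$ is definable and contains $d$, so $N_{\mathbf{C}}(d)\subseteq D_2$ forces $e\in D_2$, and $e\notin D_1$ then yields $e\in D_2-D_1$ as required.

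There is no serious obstacle here; the single conceptual step is simply to rephrase definability as closure of a set under the neighborhoods of its members, after which the argument is bookkeeping. It is worth flagging in passing that the hypothesis $|D_1|<|D_2|$ plays no role in the logic of the equivalence itself; presumably it is included to match the exchange-axiom setting in which this proposition will later be applied to rough matroids based on coverings.
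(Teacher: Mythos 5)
Your proof is correct and follows essentially the same route as the paper: both reduce definability of $D_{1}\bigcup\{d\}$ to the neighborhood-closure condition at the single point $d$, using that $D_{1}$ is already definable and that $N_{\mathbf{C}}(d)\subseteq D_{2}$. In fact your write-up is slightly more careful than the paper's, which in the forward direction asserts $N_{\mathbf{C}}(d)=\{d\}$ where only $N_{\mathbf{C}}(d)\subseteq D_{1}\bigcup\{d\}$ actually follows (and suffices) and which leaves the converse direction essentially unjustified; your remark that the hypothesis $|D_{1}|<|D_{2}|$ is never used is also accurate.
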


\begin{proof}
$\Rightarrow)$: If $\forall e\in D_{2}-D_{1}((d=e)\vee (e\notin N_{\mathbf{C}}(d))$, then $N_{\mathbf{C}}(d)=\{d\}$. Therefore,
$D_{1}\bigcup\{d\}=D_{1}\bigcup N_{\mathbf{C}}(d)=(\underset{d_{1}\in D_{1}}{\bigcup}N_{\mathbf{C}}(d_{1})\bigcup N_{\mathbf{C}}(d))=\underset{d_{2}\in D_{1}\bigcup\{d\}}{\bigcup}N_{\mathbf{C}}(d_{2})$, which contradicts $D_{1}\bigcup\{d\}\notin \mathbb{D}(U, \mathbf{C})$.

$\Leftarrow)$: If there exists $e\in D_{2}-D_{1}$ such that
$d\neq e$ and $e\in N_{\mathbf{C}}(d)$. Hence $D_{1}\bigcup\{d\}\notin\mathbb{D}(U, \mathbf{C})$.
\end{proof}
\subsection{Rough matroids based on coverings}
In the following definition, we define lower rough matroids based on coverings using the covering-based lower approximation operator.
\begin{definition}(Lower rough matroid based on a covering)
Let $\mathbf{C}$ be a covering of $U$. A lower rough matroid based on $\mathbf{C}$ is an ordered pair
$\underline{M}_{\mathbf{C}}=(U, \underline{\mathcal{I}}_{\mathbf{C}})$ where $\underline{\mathcal{I}}_{\mathbf{C}}\subseteq\mathbb{D}(U, \mathbf{C})$
satisfies the following three conditions:

\begin{flushleft}
(LI1) $\emptyset\in\underline{\mathcal{I}}_{\mathbf{C}}$;\\
(LI2) If $I\in \underline{\mathcal{I}}_{\mathbf{C}}$, $I'\in\mathbb{D}(U, \mathbf{C})$ and $XL_{\mathbf{C}}(I')\subseteq XL_{\mathbf{C}}(I)$, then $I'\in\underline{\mathcal{I}}_{\mathbf{C}}$;\\
(LI3) If $I_{1}, I_{2}\in\underline{\mathcal{I}}_{\mathbf{C}}$ and $|XL_{\mathbf{C}}(I_{1})|<|XL_{\mathbf{C}}(I_{2})|$, then there exists $I\in\underline{\mathcal{I}}_{\mathbf{C}}$ such that $XL_{\mathbf{C}}(I_{1})\subset XL_{\mathbf{C}}(I)\subseteq XL_{\mathbf{C}}(I_{1})\bigcup XL_{\mathbf{C}}(I_{2})$.
\end{flushleft}
\end{definition}

Similarly, the definition of upper rough matroids based on coverings is presented using the covering-based upper approximation operator.

\begin{definition}(Upper rough matroid based on a covering)
\label{definition2}
Let $\mathbf{C}$ be a covering of $U$. An upper rough matroid based on $\mathbf{C}$ is an ordered pair
$\overline{M}_{\mathbf{C}}=(U, \overline{\mathcal{I}}_{\mathbf{C}})$ where $\overline{\mathcal{I}}_{\mathbf{C}}\subseteq\mathbb{D}(U, \mathbf{C})$
satisfies the following three conditions:

\begin{flushleft}
(UI1) $\emptyset\in\overline{\mathcal{I}}_{\mathbf{C}}$;\\
(UI2) If $I\in \overline{\mathcal{I}}_{\mathbf{C}}$, $I'\in\mathbb{D}(U, \mathbf{C})$ and $XH_{\mathbf{C}}(I')\subseteq XH_{\mathbf{C}}(I)$, then $I'\in\overline{\mathcal{I}}_{\mathbf{C}}$;\\
(UI3) If $I_{1}, I_{2}\in\overline{\mathcal{I}}_{\mathbf{C}}$ and $|XH_{\mathbf{C}}(I_{1})|<|XH_{\mathbf{C}}(I_{2})|$, then there exists $I\in\overline{\mathcal{I}}_{\mathbf{C}}$ such that $XH_{\mathbf{C}}(I_{1})\subset XH_{\mathbf{C}}(I)\subseteq XH_{\mathbf{C}} (I_{1})\bigcup XH_{\mathbf{C}}(I_{2})$.
\end{flushleft}
\end{definition}

The following example is used to illustrate the connection between lower and upper rough matroids based on coverings.

\begin{example}
\label{example3}
Let $U=\{a, b, c, d, e, f\}$ and $\mathbf{C}=\{\{e, f\}, \{a, d, e\}, \{a, d, f\}, \{b, c, e\},$\\$ \{b, c, f\}, \{a, b, c, d\}\}$. As shown
in Example~\ref{example1}, $\mathbb{D}(U, \mathbf{C})=\{\emptyset, \{e\}, \{f\}, \{a, d\}, \{b, c\}, $\\$\{e, f\}, \{a, d, e\}, \{a, d, f\}, \{b, c, e\}, \{b, c, f\}, \{a, b, c, d\}, \{a, d, e, f\}, \{b, c, e, f\}, \{a, b, $\\$c, d, e\}, \{a, b, c, d, f\}, \{a, b, c, d, e, f\}\}$.

(1) Suppose $\mathcal{I}_{\mathbf{C}}=\{\emptyset, \{e\}, \{f\}, \{a, d\}, \{a, d, e\}, \{a, d, f\}\}$. Then $M_{\mathbf{C}}=(U, \mathcal{I}_{\mathbf{C}})$ is both a lower and an upper rough matroid based on $\mathbf{C}$.

(2) Suppose $\mathcal{I}_{\mathbf{C}}=\{\emptyset, \{e\}, \{a, d\}, \{b, c\},\{a, d, e\}, \{a, b, c, d\}\}$. Then $M_{\mathbf{C}}=(U,$\\$ \mathcal{I}_{\mathbf{C}})$ is not a lower rough matroid based on $\mathbf{C}$ or an upper rough matroid based on $\mathbf{C}$ since $|XL_{\mathbf{C}}(\{e\})|=1<|XL_{\mathbf{C}}(\{a, d\})|=2$ but there dose not exist a $I\in\mathcal{I}_{\mathbf{C}}$ such that $XL_{\mathbf{C}}(\{e\})\subset XL_{\mathbf{C}}(I)\subseteq XL_{\mathbf{C}}(\{e\})\bigcup XL_{\mathbf{C}}(\{a, d\})$
and $|XH_{\mathbf{C}}(\{e\})|=1<|XH_{\mathbf{C}}(\{a, d\})|=2$ but there dose not exist a $I\in\mathcal{I}$ such that $XH_{\mathbf{C}}(\{e\})\subset XH_{\mathbf{C}}(I)\subseteq XH_{\mathbf{C}}(\{e\})\bigcup XH_{\mathbf{C}}(\{a, d\})$.
\end{example}

It is noted that for any covering $\mathbf{C}$, $(U, \mathbb{D}(U, \mathbf{C}))$ and $(U, \emptyset)$ are both lower and upper rough matroids based on $\mathbf{C}$.
\begin{definition}(Rough matroid based on a covering)
\label{7777}
Let $\mathbf{C}$ be a covering of $U$. A rough matroid based on $\mathbf{C}$ is an ordered pair
$M_{\mathbf{C}}=(U, \mathcal{I}_{\mathbf{C}})$ where $\mathcal{I}_{\mathbf{C}}\subseteq\mathbb{D}(U, \mathbf{C})$
satisfies the following three conditions:

\begin{flushleft}
(CI1) $\emptyset\in\mathcal{I}_{\mathbf{C}}$;\\
(CI2) If $I\in \mathcal{I}_{\mathbf{C}}$, $I'\subseteq I$ and $I'\in\mathbb{D}(U, \mathbf{C})$, then $I'\in\mathcal{I}_{\mathbf{C}}$;\\
(CI3) If $I_{1}, I_{2}\in\mathcal{I}_{\mathbf{C}}$ and $|I_{1}|<|I_{2}|$, then there exists $I\in\mathcal{I}_{\mathbf{C}}$ such that $I_{1}\subset I\subseteq I_{1}\bigcup I_{2}$.
\end{flushleft}
\end{definition}

\begin{example}
\label{666666}
Let $\mathbf{C}=\{\{a, b\}, \{a, c\}, \{a, b, c\}, \{c, d\}\}$ be a covering of $U=\{a, b, c, d\}$. Then $N_{\mathbf{C}}(a)=\{a\}$, $N_{\mathbf{C}}(b)=\{a, b\}$, $N_{\mathbf{C}}(c)=\{c\}$, $N_{\mathbf{C}}(d)=\{c, d\}$. Therefore, $\mathbb{D}(U, \mathbf{C})=\{\emptyset, \{a\}, \{c\}, \{a, b\}, \{a, c\}, \{c, d\}, \{a, b, c\}, \{a, c, d\}, \{a, b, c, d\}\}$.

(1) Suppose $\mathcal{I}_{\mathbf{C}}=\{\emptyset, \{a\}, \{c\}\}$. Then $M_{\mathbf{C}}=(U, \mathcal{I}_{\mathbf{C}})$ is a rough matroid based on $\mathbf{C}$.

(2) Suppose $\mathcal{I}_{\mathbf{C}}=\{\emptyset, \{a, b\}, \{a, c\}, \{a, c, d\}\}$. Then $M_{\mathbf{C}}=(U, \mathcal{I}_{\mathbf{C}})$ is not a rough matroid based on $\mathbf{C}$ since there does not exist $I\in \mathcal{I}_{\mathbf{C}}$ such that $\{a, b\}\subset I\subseteq\{a, b\}\bigcup\{a, c, d\}$.
\end{example}

It is clear that rough matroids based on coverings are generalizations of matroids. Condition under which the rough matroid based on a coverings is also a matroid are presented in the following proposition.
\begin{proposition}
Let $\mathbf{C}$ be a covering of $U$. Then $M_{\mathbf{C}}=(U, \mathcal{I}_{\mathbf{C}})$ is a matroid
if and only if $N_{\mathbf{C}}(x)=\{x\}$ for all $x\in \bigcup\mathcal{I}_{\mathbf{C}}$.
\end{proposition}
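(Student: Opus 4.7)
The plan is to reduce the equivalence to comparing the matroid axioms (I1)--(I3) with the rough-matroid axioms (CI1)--(CI3) and to pinpoint the single extra constraint present in (CI2) and implicit in (CI3), namely that the witness sets must lie in $\mathbb{D}(U,\mathbf{C})$. Since (CI1) is literally (I1), the whole argument will revolve around showing that the definability requirement is vacuous exactly when $N_{\mathbf{C}}(x)=\{x\}$ for every $x$ appearing in some independent set.

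For the forward direction ($\Rightarrow$), I would start with an arbitrary $x\in\bigcup\mathcal{I}_{\mathbf{C}}$ and pick $I\in\mathcal{I}_{\mathbf{C}}$ with $x\in I$. Because $M_{\mathbf{C}}$ is a matroid, (I2) applied to the subset $\{x\}\subseteq I$ yields $\{x\}\in\mathcal{I}_{\mathbf{C}}$. But $\mathcal{I}_{\mathbf{C}}\subseteq\mathbb{D}(U,\mathbf{C})$, so $\{x\}$ is definable, and by Definition~\ref{definition1} we get $\{x\}=\bigcup_{y\in\{x\}}N_{\mathbf{C}}(y)=N_{\mathbf{C}}(x)$. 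This gives the desired equality for every $x\in\bigcup\mathcal{I}_{\mathbf{C}}$.

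For the backward direction ($\Leftarrow$), I would verify (I1), (I2), (I3) in turn. Axiom (I1) is immediate from (CI1). For (I2), given $I\in\mathcal{I}_{\mathbf{C}}$ and $I'\subseteq I$, every $x\in I'$ lies in $\bigcup\mathcal{I}_{\mathbf{C}}$, so by hypothesis $N_{\mathbf{C}}(x)=\{x\}$; hence
\[
\bigcup_{x\in I'}N_{\mathbf{C}}(x)=\bigcup_{x\in I'}\{x\}=I',
\]
so $I'\in\mathbb{D}(U,\mathbf{C})$, and (CI2) yields $I'\in\mathcal{I}_{\mathbf{C}}$. For (I3), given $I_{1},I_{2}\in\mathcal{I}_{\mathbf{C}}$ with $|I_{1}|<|I_{2}|$, axiom (CI3) supplies $I\in\mathcal{I}_{\mathbf{C}}$ with $I_{1}\subsetneq I\subseteq I_{1}\cup I_{2}$; choosing any $e\in I\setminus I_{1}$, necessarily $e\in I_{2}\setminus I_{1}$, and since $I_{1}\cup\{e\}\subseteq I$ the (I2) established in the previous step gives $I_{1}\cup\{e\}\in\mathcal{I}_{\mathbf{C}}$.

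The main obstacle is not computational but conceptual: one must recognise that the only essential difference between the two axiom systems is the side condition $I'\in\mathbb{D}(U,\mathbf{C})$ in (CI2), and that this side condition is forced to hold for all subsets of independent sets exactly when each singleton inside a member of $\mathcal{I}_{\mathbf{C}}$ is already definable, i.e.\ when $N_{\mathbf{C}}(x)=\{x\}$. Once this observation is in hand, the proof of (I3) from (CI3) falls out by combining the weak exchange provided by (CI3) with the freshly proved (I2).
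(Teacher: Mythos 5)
Your proof is correct and follows essentially the same route as the paper: the forward direction extracts definability of singletons from (I2), and the backward direction verifies (I1)--(I3) by observing that under the hypothesis every subset of a member of $\mathcal{I}_{\mathbf{C}}$ is automatically definable. If anything, your version is slightly tighter --- you prove (I2) explicitly (the paper dismisses it as ``straightforward'' even though that is where the hypothesis is actually used) and you derive (I3) directly from the freshly established (I2), avoiding the paper's case analysis on $|I|$.
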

\begin{proof}
$\Rightarrow)$: If there exists $x\in \mathcal{I}_{\mathbf{C}}$ such that $\{x\}\subset N_{\mathbf{C}}(x)$, then $\{x\}\notin\mathbb{D}(U, \mathbf{C})$. This contradicts (I2) of Definition~\ref{matroids}.

$\Leftarrow)$: We only need to prove $\mathcal{I}_{\mathbf{C}}$ satisfies (I1)-(I3) of Definition~\ref{matroids}.
(I1) and (I2) are straightforward. Since $\mathcal{I}_{\mathbf{C}}$ satisfies (CI3), then for all $I_{1}, I_{2}\in\mathcal{I}_{\mathbf{C}}$ and $|I_{1}|<|I_{2}|$, there exists $I\in\mathcal{I}_{\mathbf{C}}$ such that $I_{1}\subset I\subseteq I_{1}\bigcup I_{2}$.

If $|I|=|I_{1}|+1$, then we suppose $e\in I-I_{1}$. Thus $I_{1}\bigcup\{e\}=\underset{x\in I_{1}}{\bigcup}N_{\mathbf{C}}(x)\bigcup \{e\}=\underset{x'\in I}{\bigcup}N_{\mathbf{C}}(x')=I\in\mathbb{D}(U, \mathbf{C})$, i.e., $I_{1}\bigcup\{e\}\in \mathcal{I}_{\mathbf{C}}$.

If $|I|\geq|I_{1}|+2$, then $I-I_{1}=\{x_{1}, x_{2}, \ldots, x_{m}\}(m\geq2)$. Since $I_{1}\subset I$, then $I_{1}\bigcup x_{i}=I-\{x_{1}, x_{2}, \ldots, x_{i-1}, x_{i+1}, \ldots, x_{m}\}(i\in\{1, 2, \ldots, m\})\in\mathbb{D}(U, \mathbf{C})$. Thus $I_{1}\bigcup\{x_{i}\}\in \mathcal{I}_{\mathbf{C}}$.

This completes the proof.
\end{proof}

The above proposition shows the condition under which the rough matroid based on a coverings is a matroid. We illustrate it with the following example.
\begin{example}
Let $C=\{\{a, b, c\}, \{a, b, d\}, \{a, c, d\}, \{b, c\}, \{d\}\}$ be a covering of $U=\{a, b, c, d\}$.
Then $N_{\mathbf{C}}(a)=\{a\}$, $N_{\mathbf{C}}(b)=\{b\}$, $N_{\mathbf{C}}(c)=\{c\}, N_{\mathbf{C}}(d)=\{d\}$. Therefore, $\mathbb{D}(U, \mathbf{C})=2^{U}$. Suppose $\mathcal{I}_{\mathbf{C}}=\{\emptyset, \{a\}, \{c\}\}$. Then $M_{\mathbf{C}}=(U, \mathcal{I}_{\mathbf{C}})$ is both a rough matroid based on $\mathbf{C}$ and a matroid on $U$.
\end{example}
\section{Properties of rough matroids based on coverings}
\label{section3}
This section shows some properties and equivalent formulations of rough matroids based on coverings.

\begin{proposition}
Let $n>r>0$ be two integers and $\mathbf{C}$ be a covering of $n-$ set $U$. Suppose
\begin{center}
$\mathcal{I}_{\mathbf{C}}=\{I\in \mathbb{D}(U, \mathbf{C}): |I|\leq r\}$.
\end{center}
If $N_{\mathbf{C}}(x)=\{x\}$ for all $x\in U$, then $M_{\mathbf{C}}=(U, \mathcal{I}_{\mathbf{C}})$ is a rough matroid based on $\mathbf{C}$.
\end{proposition}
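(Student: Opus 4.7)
The plan is to reduce this to a straightforward verification of the three axioms (CI1)--(CI3) from Definition~\ref{7777}, after first observing that the hypothesis $N_{\mathbf{C}}(x) = \{x\}$ for every $x \in U$ collapses the family of definable sets.

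First I would prove the reduction step: under the assumption $N_{\mathbf{C}}(x) = \{x\}$ for all $x \in U$, we have $\mathbb{D}(U,\mathbf{C}) = 2^{U}$. Indeed, for any $X \subseteq U$, $\bigcup_{x \in X} N_{\mathbf{C}}(x) = \bigcup_{x \in X}\{x\} = X$, so by Definition~\ref{definition1} every subset of $U$ is definable. Consequently $\mathcal{I}_{\mathbf{C}}$ simplifies to $\{I \subseteq U : |I| \le r\}$, the uniform matroid's independent sets of rank $r$.

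Next I would check the three conditions in order. (CI1) is immediate since $|\emptyset| = 0 \le r$ and $\emptyset \in \mathbb{D}(U,\mathbf{C})$. For (CI2), if $I \in \mathcal{I}_{\mathbf{C}}$, $I' \subseteq I$, and $I' \in \mathbb{D}(U,\mathbf{C})$, then $|I'| \le |I| \le r$, so $I' \in \mathcal{I}_{\mathbf{C}}$. For (CI3), given $I_1, I_2 \in \mathcal{I}_{\mathbf{C}}$ with $|I_1| < |I_2|$, the finiteness of $U$ guarantees that $I_2 \setminus I_1 \ne \emptyset$; pick any $e \in I_2 \setminus I_1$ and set $I := I_1 \cup \{e\}$. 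Then $I \in \mathbb{D}(U,\mathbf{C}) = 2^{U}$, $|I| = |I_1| + 1 \le |I_2| \le r$, so $I \in \mathcal{I}_{\mathbf{C}}$, and clearly $I_1 \subset I \subseteq I_1 \cup I_2$.

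There is essentially no obstacle here; the only point that requires a sentence of justification is the initial collapse $\mathbb{D}(U,\mathbf{C}) = 2^{U}$, after which the argument coincides with the textbook verification that the uniform matroid $U_{r,n}$ satisfies the matroid axioms (and hence in particular the weaker axioms (CI1)--(CI3)). I would therefore keep the write-up short, stating the reduction explicitly and then disposing of the three axioms in one short paragraph each.
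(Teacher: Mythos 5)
Your proposal is correct and follows the same route as the paper: both first observe that $N_{\mathbf{C}}(x)=\{x\}$ for all $x\in U$ forces $\mathbb{D}(U,\mathbf{C})=2^{U}$, after which $\mathcal{I}_{\mathbf{C}}$ is just the independent-set family of the uniform matroid $U_{r,n}$. The paper stops at that observation, while you spell out the verification of (CI1)--(CI3); your added detail is accurate and harmless.
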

\begin{proof}
If $N_{\mathbf{C}}(x)=\{x\}$ for all $x\in U$, then $\mathbb{D}(U, \mathbf{C})=2^{U}$. Thus $M_{\mathbf{C}}=(U, \mathcal{I}_{\mathbf{C}})$ is a rough matroid based on $\mathbf{C}$.
\end{proof}

\begin{example}(Continued from Example~\ref{666666})
As shown in Example~\ref{666666}, $M_{\mathbf{C}}=(U, \mathcal{I}_{\mathbf{C}})$ is a rough matroid based on $\mathbf{C}$, Where $\mathcal{I}_{\mathbf{C}}=\{\emptyset, \{a\}, \{c\}\}$. But $N_{\mathbf{C}}(d)=\{c, d\}\neq\{d\}$.
\end{example}

The above example indicates that $N_{\mathbf{C}}(x)=\{x\}$ for all $x\in U$ is a sufficient condition and not a necessary condition of $M_{\mathbf{C}}=(U, \mathcal{I}_{\mathbf{C}})$ is a rough matroid based on $\mathbf{C}$.
\begin{corollary}
Let $n\geq r\geq 1$ be two integers and $\mathbf{C}$ be a covering of $n-$ set $U$. Suppose
\begin{center}
$\mathcal{I}_{\mathbf{C}}=\{I\in \mathbb{D}(U, \mathbf{C}): |I|\leq r\}$.
\end{center}
Then $M_{\mathbf{C}}=(U, \mathcal{I}_{\mathbf{C}})$ is a matroid if and only if $N_{\mathbf{C}}(x)=\{x\}$ for all $x\in U$.
\end{corollary}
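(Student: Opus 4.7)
The plan is to reduce everything to the preceding proposition, which already gives the biconditional between $M_{\mathbf{C}}$ being a matroid and $N_{\mathbf{C}}(x)=\{x\}$ holding on $\bigcup\mathcal{I}_{\mathbf{C}}$. The only new content in the corollary is promoting that condition from $\bigcup\mathcal{I}_{\mathbf{C}}$ to the full universe $U$, which I will justify using the explicit shape $\mathcal{I}_{\mathbf{C}}=\{I\in\mathbb{D}(U,\mathbf{C}):|I|\leq r\}$ together with $r\geq 1$.

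For the backward direction ($\Leftarrow$), I would assume $N_{\mathbf{C}}(x)=\{x\}$ for every $x\in U$. Under this hypothesis every singleton, and hence every subset of $U$, lies in $\mathbb{D}(U,\mathbf{C})$, so $\mathcal{I}_{\mathbf{C}}$ collapses to $\{I\subseteq U:|I|\leq r\}$, the uniform matroid of rank $r$ on $U$. Verifying (I1)--(I3) of Definition~\ref{matroids} is then routine: (I1) and (I2) are immediate from the cardinality bound, while (I3) holds because adjoining any element of $I_{2}\setminus I_{1}$ to $I_{1}$ preserves the bound whenever $|I_{1}|<|I_{2}|\leq r$.

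For the forward direction ($\Rightarrow$), assuming $M_{\mathbf{C}}$ is a matroid I would first invoke the preceding proposition to conclude $N_{\mathbf{C}}(x)=\{x\}$ for every $x\in\bigcup\mathcal{I}_{\mathbf{C}}$. It then suffices to show $\bigcup\mathcal{I}_{\mathbf{C}}=U$. Fix $y\in U$; since $U\in\mathbb{D}(U,\mathbf{C})$ and $r\geq 1$, I would start from $\emptyset\in\mathcal{I}_{\mathbf{C}}$ and iterate the matroid exchange axiom (I3), augmenting against successively larger definable sets containing $y$, to produce a member of $\mathcal{I}_{\mathbf{C}}$ of size at most $r$ that contains $y$; then (I2) supplies the singleton witness, and the preceding proposition applies at $y$.

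The principal obstacle I anticipate is this extension step: a pathological covering could in principle leave some $y\in U$ outside $\bigcup\mathcal{I}_{\mathbf{C}}$, and ruling this out requires the full strength of the matroid axioms together with the cap $r\geq 1$ and the definability of $U$ itself. A careful induction on the size of the constructed independent set, using (I3) with $U\in\mathbb{D}(U,\mathbf{C})$ as the exchange target, should close the gap and complete the proof.
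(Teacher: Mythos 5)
Your backward direction is correct and is exactly what the paper dismisses as ``straightforward'': once $N_{\mathbf{C}}(x)=\{x\}$ for all $x$, every subset is definable, so $\mathcal{I}_{\mathbf{C}}$ is the uniform matroid of rank $r$. The trouble is the forward direction. Your plan is to obtain $N_{\mathbf{C}}(x)=\{x\}$ on $\bigcup\mathcal{I}_{\mathbf{C}}$ from the preceding proposition and then to prove $\bigcup\mathcal{I}_{\mathbf{C}}=U$ by iterating (I3) ``with $U\in\mathbb{D}(U,\mathbf{C})$ as the exchange target.'' That step cannot even begin: (I3) augments an independent set against another \emph{independent} set, and $U$ (or any large definable set containing a given $y$) is in general not in $\mathcal{I}_{\mathbf{C}}$, since its cardinality may exceed $r$; the definability of $U$ gives you nothing. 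Worse, the obstacle you flag is not closable, because $\bigcup\mathcal{I}_{\mathbf{C}}=U$ can genuinely fail while $M_{\mathbf{C}}$ is a matroid. Take $U=\{a,b\}$, $\mathbf{C}=\{\{a,b\}\}$ and $r=1$. Then $N_{\mathbf{C}}(a)=N_{\mathbf{C}}(b)=\{a,b\}$, so $\mathbb{D}(U,\mathbf{C})=\{\emptyset,\{a,b\}\}$ and $\mathcal{I}_{\mathbf{C}}=\{\emptyset\}$, which satisfies (I1)--(I3) vacuously and hence is a matroid, yet $N_{\mathbf{C}}(a)\neq\{a\}$. So no amount of care with the induction will complete your outline: the elements you are trying to reach may be loops of the matroid, lying in no independent set at all.

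For comparison, the paper's own proof of this direction is a one-liner with the same hole: it asserts that $\{x\}\notin\mathbb{D}(U,\mathbf{C})$ ``contradicts $M_{\mathbf{C}}$ being a matroid,'' which tacitly assumes every singleton must be independent; Definition~\ref{matroids} permits loops, and the example above realizes exactly that situation. The implication does go through under an additional hypothesis guaranteeing that every point of $U$ lies in some member of $\mathcal{I}_{\mathbf{C}}$ (equivalently $\bigcup\mathcal{I}_{\mathbf{C}}=U$, e.g.\ if every $\{x\}$ or every $N_{\mathbf{C}}(x)$ has cardinality at most $r$ and is definable in a suitable sense); with that in hand, (I2) immediately yields $\{x\}\in\mathcal{I}_{\mathbf{C}}\subseteq\mathbb{D}(U,\mathbf{C})$ and hence $N_{\mathbf{C}}(x)=\{x\}$, which is shorter and more direct than routing through the exchange axiom. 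As written, though, both your argument and the paper's contain the same unfixable gap in the forward direction.
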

\begin{proof}
$\Rightarrow)$: If there exists $x\in U$ such that $N_{\mathbf{C}}(x)\neq\{x\}$, then $\{x\}\notin\mathbb{D}(U, \mathbf{C})$. This contradicts
$M_{\mathbf{C}}=(U, \mathcal{I}_{\mathbf{C}})$ is a matroid.

$\Leftarrow)$: It is straightforward.
\end{proof}

The above proposition presented as a generalization of the uniform matroids.
\begin{proposition}
Let $M_{\mathbf{C}_{1}}=(U_{1}, \mathcal{I}_{\mathbf{C}_{1}})$ and $M_{\mathbf{C}_{2}}=(U_{2}, \mathcal{I}_{\mathbf{C}_{2}})$ be two rough matroids based on coverings, $U_{1}\bigcap U_{2}=\emptyset$ and $U=U_{1}\bigcup U_{2}$. Then $M_{\mathbf{C}_{1}}\bigoplus M_{\mathbf{C}_{2}}=(U, \mathcal{I})$ is a rough matroid based on $\mathbf{C}_{1}\biguplus\mathbf{C}_{2}$.
\end{proposition}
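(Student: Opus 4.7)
The plan is to unwind the direct-sum construction and then verify the three axioms (CI1)--(CI3) of Definition~\ref{7777} one at a time, doing the nontrivial work only for (CI3). Following the standard convention suggested by the notation, I would read $\mathbf{C} := \mathbf{C}_{1}\biguplus\mathbf{C}_{2}$ as the disjoint union of the two covering families and $\mathcal{I} := \{I_{1}\bigcup I_{2}: I_{1}\in\mathcal{I}_{\mathbf{C}_{1}},\ I_{2}\in\mathcal{I}_{\mathbf{C}_{2}}\}$. Because $U_{1}\bigcap U_{2}=\emptyset$ and every member of $\mathbf{C}_{i}$ lies in $U_{i}$, no block of $\mathbf{C}_{3-i}$ contains any $x\in U_{i}$, so the first key observation is the pointwise identity $N_{\mathbf{C}}(x)=N_{\mathbf{C}_{i}}(x)\subseteq U_{i}$ for every $x\in U_{i}$.

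From this identity I would derive the structural lemma that drives everything else: a subset $D\subseteq U$ belongs to $\mathbb{D}(U,\mathbf{C})$ iff $D\bigcap U_{i}\in \mathbb{D}(U_{i},\mathbf{C}_{i})$ for $i=1,2$. Indeed, $D=\underset{x\in D}{\bigcup}N_{\mathbf{C}}(x)$ splits as the disjoint union over $U_{1}$ and $U_{2}$, and each half becomes $\underset{x\in D\cap U_{i}}{\bigcup}N_{\mathbf{C}_{i}}(x)$. In particular every member of $\mathcal{I}$ is a definable set with respect to $\mathbf{C}$, so $\mathcal{I}\subseteq\mathbb{D}(U,\mathbf{C})$, which is a prerequisite of Definition~\ref{7777}.

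Now I would verify the axioms. (CI1) is immediate: $\emptyset=\emptyset\bigcup\emptyset\in\mathcal{I}$ since $\emptyset\in\mathcal{I}_{\mathbf{C}_{i}}$ by (CI1) applied to each factor. For (CI2), take $I=I_{1}\bigcup I_{2}\in\mathcal{I}$ and $I'\subseteq I$ with $I'\in\mathbb{D}(U,\mathbf{C})$; set $I'_{i}=I'\bigcap U_{i}$. Then $I'_{i}\subseteq I_{i}$ and, by the structural lemma, $I'_{i}\in\mathbb{D}(U_{i},\mathbf{C}_{i})$, hence (CI2) applied to $M_{\mathbf{C}_{i}}$ yields $I'_{i}\in\mathcal{I}_{\mathbf{C}_{i}}$, so $I'=I'_{1}\bigcup I'_{2}\in\mathcal{I}$.

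The main obstacle is (CI3), but the disjointness of $U_{1}$ and $U_{2}$ reduces it to a pigeonhole argument. Given $I=I_{1}\bigcup I_{2}$ and $J=J_{1}\bigcup J_{2}$ in $\mathcal{I}$ with $|I|<|J|$, disjointness gives $|I_{1}|+|I_{2}|<|J_{1}|+|J_{2}|$, so $|I_{i}|<|J_{i}|$ for some $i\in\{1,2\}$; say $i=1$. Applying (CI3) inside $M_{\mathbf{C}_{1}}$ produces $K_{1}\in\mathcal{I}_{\mathbf{C}_{1}}$ with $I_{1}\subset K_{1}\subseteq I_{1}\bigcup J_{1}$. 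Setting $K:=K_{1}\bigcup I_{2}$, one immediately checks $K\in\mathcal{I}$, $I\subset K$ (the strict inclusion is preserved because $K_{1}\setminus I_{1}\subseteq U_{1}$ is disjoint from $I_{2}\subseteq U_{2}$), and $K\subseteq I\bigcup J$, completing the proof. The only point requiring any care is this preservation of strictness across the union, which is exactly where the hypothesis $U_{1}\bigcap U_{2}=\emptyset$ is needed.
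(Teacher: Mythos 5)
Your proof is correct and follows essentially the same route as the paper's: both reduce everything to the decomposition of definable sets across the disjoint universes and then verify (CI1)--(CI3) componentwise, with the same pigeonhole step for (CI3). The only difference is that you justify the decomposition via the neighborhood identity $N_{\mathbf{C}}(x)=N_{\mathbf{C}_{i}}(x)$ for $x\in U_{i}$, which the paper merely asserts, so your write-up is if anything slightly more complete.
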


\begin{proof}
It is easy to prove $\mathbf{C}=\mathbf{C}_{1}\biguplus\mathbf{C}_{2}$ be a covering of $U$. Since $U_{1}\bigcap U_{2}=\emptyset$, then
$\mathbb{D}(U, \mathbf{C})=\mathbb{D}(U_{1}, \mathbf{C}_{1})\biguplus\mathbb{D}(U_{2}, \mathbf{C}_{2})$.
Therefore we only need to prove $\mathcal{I}$ satisfies (CI1)-(CI3) of Definition~\ref{7777}.

(1) Since $\emptyset\in\mathcal{I}_{\mathbf{C}_{1}}$ and $\emptyset\in\mathcal{I}_{\mathbf{C}_{2}}$, then $\emptyset\in\mathcal{I}$, i.e., (CI1) holds.

(2) If $I\in\mathcal{I}$, then there exist $I'\in\mathcal{I}_{\mathbf{C}_{1}}$ and
$I''\in\mathcal{I}_{\mathbf{C}_{2}}$ such that $I=I'\bigcup I''$. Suppose $I^{*}\subseteq I$ and $I^{*}\in\mathbb{D}(U, \mathbf{C})$. Then
$I^{*}=I^{*}\bigcap I=(I^{*}\bigcap I')\bigcup(I^{*}\bigcap I'')$. Therefore $I^{*}\bigcap I'\in\mathbb{D}(U_{1}, \mathbf{C}_{1}), I^{*}\bigcap I'\subseteq I'$ and $I^{*}\bigcap I''\in\mathbb{D}(U_{2}, \mathbf{C}_{2}), I^{*}\bigcap I''\subseteq I''$. Since $\mathcal{I}_{\mathbf{C}_{1}}$ and $\mathcal{I}_{\mathbf{C}_{2}}$ satisfy (CI2), then $\mathcal{I}$ satisfies (CI2).

(3) If $I', I''\in\mathcal{I}$, then there exist $I_{1}', I_{1}''\in\mathcal{I}_{\mathbf{C}_{1}}$ and $I_{2}', I_{2}''\in\mathcal{I}_{\mathbf{C}_{2}}$ such that $I'=I_{1}'\bigcup I_{2}', I''=I_{1}''\bigcup I_{2}''$. Suppose $|I'|<|I''|$. Then $|I_{1}'\bigcup I_{2}'|<|I_{1}''\bigcup I_{2}''|$. Since $U_{1}\bigcap U_{2}=\emptyset$, then $|I_{1}'|<|I_{1}''|$ or $|I_{2}'|<|I_{2}''|$.

(a) If $|I_{1}'|<|I_{1}''|$, then there exists $I_{1}\in\mathcal{I}_{\mathbf{C}_{1}}$ such that $I_{1}'\subset I_{1}\subseteq I_{1}'\bigcup I_{1}''$.
Therefore $I_{1}'\bigcup I_{2}'=I'\subset I_{1}\bigcup I_{2}'\subseteq I_{1}'\bigcup I_{1}''\bigcup I_{2}'\subseteq I'\bigcup I''$. Since $I_{1}\bigcup I_{2}'\in\mathcal{I}$, then $\mathcal{I}$ satisfies (CI3).

(b) If $|I_{2}'|<|I_{2}''|$, then there exists $I_{2}\in\mathcal{I}_{\mathbf{C}_{2}}$ such that $I_{2}'\subset I_{2}\subseteq I_{2}'\bigcup I_{2}''$.
Therefore $I_{2}'\bigcup I_{1}'=I'\subset I_{2}\bigcup I_{1}'\subseteq I_{2}'\bigcup I_{2}''\bigcup I_{1}'\subseteq I'\bigcup I''$. Since $I_{2}\bigcup I_{1}'\in\mathcal{I}$, then $\mathcal{I}$ satisfies (CI3).

This completes the proof.
\end{proof}

The above proposition shows that the direct sum of two rough matroids based coverings is also a rough matroid based on a covering.
We illustrate it with the following example.
\begin{example}
Let $\mathbf{C}_{1}=\{\{b, c\}, \{a, c\}\}$ be a covering of $U_{1}=\{a, b, c\}$ and $\mathbf{C}_{2}=\{\{d, e, f\}, \{d, e, g\}, \{d, f, g\}, \{e, f\}, \{g\}\}$ be a covering of $U_{2}=\{d, e, f, g\}$. Suppose $\mathcal{I}_{\mathbf{C}_{1}}=\{\emptyset, \{c\}, \{a, c\}\}$ and $\mathcal{I}_{\mathbf{C}_{2}}=\{\emptyset, \{d\}, \{e\}, \{f\}, \{d, e\}, \{e, f\}\}$. Then $M_{\mathbf{C}_{1}}=(U_{1}, \mathcal{I}_{\mathbf{C}_{1}})$ is a rough matroid based on $\mathbf{C}_{1}$ and $M_{\mathbf{C}_{2}}=(U_{2}, \mathcal{I}_{\mathbf{C}_{2}})$ is a rough matroid based on $\mathbf{C}_{1}$. Suppose
\begin{center}
$\mathcal{I}=\mathcal{I}_{\mathbf{C}_{1}}\biguplus\mathcal{I}_{\mathbf{C}_{2}}=\{\emptyset, \{c\}, \{d\}, \{e\}, \{f\}, \{a, c\}, \{c, d\}, \{c, e\}, \{c, f\},$\\$ \{d, e\}, \{e, f\}, \{a, c, d\}, \{a, c, e\}, \{a, c, f\}, \{c, d, e\}, \{c, e, f\}, \{a, c, d, e\}, \{a, c, e, f\}\}$.
\end{center}
Then $M_{\mathbf{C}_{1}}\bigoplus M_{\mathbf{C}_{2}}=(U, \mathcal{I})$ is a rough matroid based on $\mathbf{C}$, where $U=U_{1}\bigcup U_{2}$ and $\mathbf{C}=\mathbf{C}_{1}\biguplus\mathbf{C}_{2}$.
\end{example}

From the viewpoint of matroids, the direct sum of rough matroids based on coverings is considered. For this purpose, a denotation is presented.
\begin{definition}
Let $\mathcal{A}_{1}$ and $\mathcal{A}_{2}$ be two families of subsets of $U$. One can denote
\begin{center}
$\mathcal{A}_{1}\biguplus\mathcal{A}_{2}=\{X_{1}\bigcup X_{2}: X_{1}\in\mathcal{A}_{1}, X_{2}\in\mathcal{A}_{2}\}$.
\end{center}
\end{definition}

\begin{definition}(Direct sum of rough matroids based on coverings)
Let $M_{\mathbf{C}_{1}}=(U_{1}, \mathcal{I}_{\mathbf{C}_{1}})$ and $M_{\mathbf{C}_{2}}=(U_{2}, \mathcal{I}_{\mathbf{C}_{2}})$ be two rough matroids based on coverings, $U_{1}\bigcap U_{2}=\emptyset$ and $U=U_{1}\bigcup U_{2}$.
Suppose
\begin{center}
$M_{\mathbf{C}_{1}}\bigoplus M_{\mathbf{C}_{2}}=(U, \mathcal{I})$, where $\mathcal{I}=\mathcal{I}_{\mathbf{C}_{1}}\biguplus\mathcal{I}_{\mathbf{C}_{2}}$.
\end{center}
Then we say $M_{\mathbf{C}_{1}}\bigoplus M_{\mathbf{C}_{2}}$ is the direct sum of $M_{\mathbf{C}_{1}}$ and $M_{\mathbf{C}_{2}}$.
\end{definition}
\begin{proposition}
Let $M_{\mathbf{C}}=(U, \mathcal{I}_{\mathbf{C}})$ be a rough matroids based on $\mathbf{C}$ if and only if $\mathcal{I}_{\mathbf{C}}$ satisfies (CI1), (CI2) and
\begin{flushleft}
(CI3)$'$ for every $D\in\mathbb{D}(U, \mathbf{C})$, all the maximal subsets of $D$ which are contained in $\mathcal{I}_{\mathbf{C}}$ have the same
cardinality.
\end{flushleft}
\end{proposition}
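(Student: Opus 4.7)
The plan is to mirror the classical matroid fact that the augmentation axiom is equivalent to the equal-cardinality of bases, adapted to the setting where the exchange is restricted to definable sets. I would split the proof into the two implications, and for the nontrivial direction rely on the closure of $\mathbb{D}(U,\mathbf{C})$ under unions established in Proposition 1.

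For the direction $(\Rightarrow)$, assume $M_{\mathbf{C}}$ is a rough matroid based on $\mathbf{C}$ and fix any $D\in\mathbb{D}(U,\mathbf{C})$. Let $J_{1}$ and $J_{2}$ be two inclusion-maximal elements of the collection $\{I\in\mathcal{I}_{\mathbf{C}}:I\subseteq D\}$. Assume for contradiction that $|J_{1}|<|J_{2}|$; applying (CI3) to the pair $J_{1},J_{2}$ produces some $I\in\mathcal{I}_{\mathbf{C}}$ with $J_{1}\subsetneq I\subseteq J_{1}\cup J_{2}\subseteq D$, contradicting the maximality of $J_{1}$. Hence all such maximal subsets of $D$ share the same cardinality, giving (CI3)$'$.

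For the direction $(\Leftarrow)$, assume (CI1), (CI2), (CI3)$'$ and verify (CI3). Given $I_{1},I_{2}\in\mathcal{I}_{\mathbf{C}}$ with $|I_{1}|<|I_{2}|$, set $D:=I_{1}\cup I_{2}$; since $I_{1},I_{2}\in\mathbb{D}(U,\mathbf{C})$, Proposition 1 forces $D\in\mathbb{D}(U,\mathbf{C})$. For each $k\in\{1,2\}$ the family $\{I\in\mathcal{I}_{\mathbf{C}}:I_{k}\subseteq I\subseteq D\}$ is nonempty (it contains $I_{k}$) and finite, so it admits an inclusion-maximal element $J_{k}$. A one-line check shows each $J_{k}$ is actually maximal in $\{I\in\mathcal{I}_{\mathbf{C}}:I\subseteq D\}$, since any $I'\in\mathcal{I}_{\mathbf{C}}$ with $J_{k}\subsetneq I'\subseteq D$ would also contain $I_{k}$ and violate the maximality just chosen. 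Now (CI3)$'$ gives $|J_{1}|=|J_{2}|\geq|I_{2}|>|I_{1}|$; combined with $I_{1}\subseteq J_{1}\subseteq D$ this yields $I_{1}\subsetneq J_{1}\subseteq I_{1}\cup I_{2}$, so $I:=J_{1}$ witnesses (CI3).

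The only delicate step, and the one most likely to obscure the argument, is verifying that a maximal element of $\{I\in\mathcal{I}_{\mathbf{C}}:I_{1}\subseteq I\subseteq D\}$ is automatically maximal in the larger family $\{I\in\mathcal{I}_{\mathbf{C}}:I\subseteq D\}$ to which (CI3)$'$ applies; this is precisely where the containment $\mathcal{I}_{\mathbf{C}}\subseteq\mathbb{D}(U,\mathbf{C})$ does the quiet work, since every set considered is automatically definable and no additional definability check is needed when comparing extensions. Once this point is settled, both implications reduce to short counting arguments, and the equivalence follows.
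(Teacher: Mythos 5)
Your proof is correct, and both directions follow the same overall outline as the paper's: the forward implication is the identical contradiction with maximality, and the backward implication likewise sets $D=I_{1}\cup I_{2}$ and invokes (CI3)$'$. The difference lies in the witness for (CI3). The paper asserts that one "must have $I_{1}\cup\{e\}\in\mathcal{I}_{\mathbf{C}}$" for some $e\in D-I_{1}$, i.e.\ a single-element augmentation; this is not justified in the covering setting, since members of $\mathcal{I}_{\mathbf{C}}$ must lie in $\mathbb{D}(U,\mathbf{C})$ and $I_{1}\cup\{e\}$ need not be definable. You avoid this by taking an inclusion-maximal $J_{1}\in\mathcal{I}_{\mathbf{C}}$ with $I_{1}\subseteq J_{1}\subseteq D$, checking that it is maximal among all independent subsets of $D$, and using (CI3)$'$ to get $|J_{1}|=|J_{2}|\geq|I_{2}|>|I_{1}|$, so that $J_{1}$ itself witnesses (CI3) --- which only asks for \emph{some} $I$ with $I_{1}\subset I\subseteq I_{1}\cup I_{2}$, not a one-element extension. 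You also supply two details the paper leaves implicit: that $D\in\mathbb{D}(U,\mathbf{C})$ (via Proposition~\ref{proposition1}), which is needed before (CI3)$'$ can be applied to $D$, and the existence of the maximal elements by finiteness. In short, your route is the same in spirit but repairs the one genuinely shaky step in the paper's backward direction.
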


\begin{proof}
If (CI3)$'$ is not true, then there exists $D'\in\mathbb{D}(U, \mathbf{C})$ such that $I_{1}, I_{2}\in\mathcal{I}_{\mathbf{C}}$ are maximal subsets of $D'$ with $|I_{1}|\neq|I_{2}|$. Without lost of generality, we assume $|I_{1}|<|I_{2}|$. By (CI3), there exists $I\in\mathcal{I}_{\mathbf{C}}$ such
that $I_{1}\subseteq I$, which contradicts the maximality of $I_{1}$.

Conversely, suppose $\mathcal{I}_{\mathbf{C}}$ satisfies (CI1), (CI2) and (CI3)$'$. By Definition~\ref{7777}, it is necessary to show that
$\mathcal{I}_{\mathbf{C}}$ satisfies (CI3). Let $I_{1}, I_{2}\in\mathcal{I}_{\mathbf{C}}$ with $|I_{1}|<|I_{2}|$, set $D=I_{1}\bigcup I_{2}$. By (CI3)$'$,  all the maximal subsets of $D$ which are contained in $\mathcal{I}_{\mathbf{C}}$ have the same
cardinality. Since $|I_{1}|<|I_{2}|$, we must have $I_{1}\bigcup\{e\}\in\mathcal{I}_{\mathbf{C}}$, $I_{1}\bigcup\{e\}\subseteq D$ and $e\notin I_{1}$; it is obvious $I_{1}\subset I_{1}\bigcup\{e\}\subseteq I_{1}\bigcup I_{2}$, namely, $\mathcal{I}_{\mathbf{C}}$ satisfies (CI3).

This completes the proof.
\end{proof}

The above proposition shows an equivalent formulation of rough matroids based on coverings.

\section{Conclusions}
\label{section4}
In this paper, we proposed a framework for combining covering-based rough sets and matroids.
First, we investigated some properties of the definable sets with respect to a covering.
Specifically, it is interesting that the set of all definable sets with respect to a covering, equipped with the relation of inclusion $\subseteq$, constructs
a lattice.
Second, we proposed the rough matroids based on coverings, which are a generalization of the rough matroids based on relations.
Finally, some properties of rough matroids based on coverings are explored. Further, some important concepts such as base, et al. from matroids could generalize to the rough matroids based on coverings. Moreover, the equivalent formulations of rough matroids based on coverings maybe an interesting research topic.

\section{Acknowledgments}
This work is in part supported by the National Science Foundation of China
under Grant Nos. 61170128, 61379049, and 61379089 and the Postgraduate Education Innovation Base for Computer Application Technology,  Signal and Information Processing of Fujian Province (No. [2008]114, High Education of Fujian).


%

\end{document}